\documentclass[10pt,conference,compsocconf]{IEEEtran}
\usepackage{algorithm}
\usepackage[noend]{algorithmic}
\usepackage[cmex10]{amsmath}
\usepackage{booktabs}
\usepackage{graphicx}
\usepackage{subfig}
\usepackage{tabularx}

\newcommand{\calX}{\mathcal{X}}
\newcommand{\SABNA}{\textsf{\small SABNA}}
\newcommand{\UrL}{\textsf{\small URLearning}}

\newtheorem{theorem}{Theorem}

\begin{document}
\title{Exact Structure Learning of Bayesian Networks by Optimal Path Extension}

\author{\IEEEauthorblockN{Subhadeep Karan}
  \IEEEauthorblockA{Department of Computer Science and Engineering\\
    University at Buffalo\\
    Buffalo, NY, USA\\
    Email: skaran@buffalo.edu}
  \and
  \IEEEauthorblockN{Jaroslaw Zola}
  \IEEEauthorblockA{
    Department of Computer Science and Engineering\\
    Department of Biomedical Informatics\\
    University at Buffalo\\
    Buffalo, NY, USA\\
    Email: jzola@buffalo.edu}
}
\maketitle

\begin{abstract}
Bayesian networks are probabilistic graphical models often used in big data analytics. The problem of Bayesian network exact structure learning is to find a network structure that is optimal under certain scoring criteria. The problem is known to be NP-hard and the existing methods are both computationally and memory intensive. In this paper, we introduce a new approach for exact structure learning that leverages relationship between a partial network structure and the remaining variables to constrain the number of ways in which the partial network can be optimally extended. Via experimental results, we show that the method provides up to three times improvement in runtime, and orders of magnitude reduction in memory consumption over the current best algorithms.
\end{abstract}

\begin{IEEEkeywords}
  Bayesian Networks; Exact Structure Learning; Score-based Learning;
\end{IEEEkeywords}

\IEEEpeerreviewmaketitle

\section{Introduction}

Bayesian networks (BNs) are a class of probabilistic graphical models that capture conditional relationships among a set of random variables. In BNs, the relationship between variables is qualitatively described by conditional independencies, and quantitatively assessed by conditional probability distributions. BNs serve as a powerful tool for structuring probabilistic information and hence are an ideal framework for complex inferences including predictive, diagnostic and explanatory reasoning~\cite{Koller2009}. Over the last decades, BNs have been successfully applied in many domains ranging from diagnostic systems~\cite{Kahn1997,Bobbio2001,Pearl2014}, clinical decision support~\cite{Sesen2013,Forsberg2011,Heckerman1992}, uncertainty quantification in numerical methods~\cite{Hawkins-Daarud2013}, to systems biology~\cite{Ott2004,Needham2007,Nikolova2013} and genomics~\cite{Jiang2011}. In many real-life applications, BNs outperform more sophisticated machine learning methods, or are desired due to the support for speculative queries and the ease of interpretation. For example, in~\cite{Zhang2007} a simple BN with six variables has been applied to characterize various behaviors of service-oriented computer systems. The network was consistently more accurate in predicting the studied systems' response time when compared to fairly sophisticated neural nets. In systems biology, BNs are often built from gene expression data and are directly used to analyze potential regulatory interactions between genes, which is possible thanks to the explicit network structure representation~\cite{Needham2007}.

While BNs offer multiple advantages in how they represent probabilities and how they explicitly handle uncertainty, they also pose challenges. This is because both structure learning and probabilistic inference in BNs have been demonstrated to be NP-hard~\cite{Chickering1996,Cooper1990}. This fact becomes significant in the context of big data. On the one hand, by leveraging big data we can consider more complex and realistic networks (e.g. by including more variables), and we can obtain more accurate probability estimates required to learn and parametrize these networks. On the other hand, the computational complexity of discovering BN structure becomes prohibitive for large data and exact learning algorithms have to be replaced by approximations or heuristics~\cite{Koller2009}. However, these algorithms do not provide guarantees on the quality of the structures they find. At the same time, in many real-life scenarios finding the optimal BN structure is a necessity, for example to make different models (e.g. BNs learned under different statistical criteria) comparable, or to allow for precise reasoning about models' performance.

To address this challenge, we introduce a new method to accelerate a scoring-based exact structure learning of BNs. Our method, which we call {\it optimal path extension}, leverages shortest-path formulation of the BN structure learning. It takes advantage of the relationship between a partial network structure and the remaining variables to constraint the number of ways in which the partial network can be optimally extended. This has the effect of ``compacting'' the dynamic programming lattice explored during the structure search, thus practically reducing computational and memory complexity. The technique is general and can be combined with various BN search strategies, such as BFS or different variants of the A-star algorithm. Through experimental results, we show that the method provides up to three times improvement in runtime, and orders of magnitude reduction in the memory consumption over the current best algorithms. Thus, our approach significantly expands the range of applications in which the exact BN structure learning can be applied, including for big data analytics.

The remainder of this paper is organized as follows: in Section~\ref{sec:problem}, we provide basic definitions and formally state the BN exact structure learning problem. In Section~\ref{sec:method} we introduce our proposed method, and we demonstrate its experimental validation in Section~\ref{sec:results}. We conclude the paper in Section~\ref{sec:conclusion}.

\section{Preliminaries and Problem Formulation}\label{sec:problem}

Formally, a Bayesian network over a set of $n$ random variables $\calX = \{X_1, \ldots, X_n\}$ is a pair $(G,P)$, where $G$ is a directed acyclic graph (DAG) with a set of vertices $\calX$, $P$ is a joint probability distribution over the same set of variables, and $G$ encodes conditional independencies induced by $P$. Let $Pa(X_i)$ denote a set of parents of $X_i$ in $G$, i.e. $Pa(X_i)$ consists of all $X_j \in \calX$ such that there exists an edge from $X_j$ to $X_i$ in $G$. If the pair $(G, P)$ is a Bayesian network, then every variable $X_{i}$ must be independent from its non-descendants given its parents $Pa(X_i)$. Here non-descendants of $X_i$ are all variables that cannot be reached from $X_i$ in $G$. Intuitively, a Bayesian network provides compact and graphical representation of the joint probability~$P$. Indeed, following the chain rule of probability a Bayesian network allows for a succinct factorization of~$P$, which in turn drastically reduces the cost of inferences and enables qualitative analysis of the resulting model.

Bayesian networks can be regarded as supervised techniques in the sense that both parameters and structure of a BN can be learned from data. Given a complete input data set represented by $D = \{ D_1,\ldots,D_n\}$, where $D_i$ is a vector of $m$ observations of $X_i$, we are interested in finding a graph $G$ that best explains data in $D$. This problem is known as Bayesian network structure learning. In general, there are two broad classes of structure learning methods. In the constraint-based learning, a statistical test is used to identify a DAG that is consistent with independencies encoded by the data $D$~\cite{Neapolitan2003}. These techniques are heuristics and they offer limited theoretical guarantees with respect to the solutions they find. In the score-based learning, a search strategy is used to find a DAG that is optimal under a certain scoring criterion~\cite{Friedman1997}. Because these are optimization techniques, exact solutions can be found and reasoned about.

Let $Score(G{:}D)$ be a scoring function evaluating quality of the network structure $G$ with respect to the input data~$D$. Furthermore, let $Score(G{:}D)$ be decomposable, that is: \[ Score(G{:}D) = \sum_{X_i \in \calX}s(X_i,Pa(X_i)), \] where $s(X_i,Pa(X_i))$ is a score contribution of $X_i$ when its parents are $Pa(X_i)$. Examples of such scoring functions include popular BIC~\cite{Schwarz1978} and MDL~\cite{Schwarz1978} derived from information theory or BD~\cite{Cooper1992} and BDe~\cite{Heckerman1999} that implement Bayesian scoring criteria. In this paper, we are considering the exact score-based structure learning problem, which is to find an optimal\footnote{We consistently use ``an optimal'' and not ``the optimal'' as multiple optimal solutions may exist.} structure $G$ given a scoring function $Score(G{:}D)$. We do not focus on one particular scoring function and hence we do not discuss details of how to compute $s(X_i,Pa(X_i))$ from data, except to note that the cost of $s$ is related to the number of observations and the size of the parents set $Pa(X_i)$. However, we exploit the fact that the objective function is decomposable. Decomposability is commonly assumed to improve the search process as local changes to a network structure can be evaluated quickly. Nevertheless, the problem remains challenging owing to the super-exponential size of the search space.

\subsection{Optimization Problem}

Consider a set $\calX = \{ X_1, X_2, \ldots, X_n \}$ of $n$ random variables and a scoring function $Score(G{:}D)$ that we want to minimize. The search space of all potential network structures is super-exponential and consists of $C(n) = \sum_{i=1}^{n} (-1)^{(i+1)}\binom{n}{i}2^{i(n-i)}C(n-i)$ DAGs with $n$ nodes. However, any DAG with nodes $\calX$ can be equivalently represented via one of its topological orderings of $\calX$. A topological ordering implies that $X_i$ is always preceded by $X_j$, written as $X_j \prec X_i$, if $X_j$ is a parent of $X_i$, i.e. $X_j \in Pa(X_i)$. Let $\pi(U)$ denote a topological ordering over a set $U \subseteq \calX$. To find an optimal network structure it is sufficient to find its optimal ordering keeping track of parents assigned to each variable $X_i$. Because relative ordering of parents of $X_i$ is irrelevant and the scoring function is decomposable, we can leverage dynamic programming to constraint the search space. This general idea has been exploited in different variants, for example in~\cite{Ott2004,Nikolova2013,Koivisto2004,Singh2005}, and works as follows. Because any DAG must have at least one sink node (i.e. a node without descendants), we can first identify an optimal sink and find its optimal parents assignment (i.e. its optimal parents set). Then, we can continue with the remaining nodes recursively organizing them into an optimal structure. Because we know that a sink node has no successors, it can be placed at the end of the topological order we are building. Let $d(X_i, U), U \subseteq \calX-\{X_i\}$, be the score of selecting optimal parents of $X_i$ from $U$:
\begin{equation}\label{eq:d}
d(X_i,U) = \min 
  \begin{cases}
  s(X_i,U), \\
  \displaystyle\min_{X_j \in U} d(X_i,U-\{X_j\}).
  \end{cases}
\end{equation}
The optimal parents set of $X_i$ is a subset of $U$ that minimizes $d(X_i, U)$.
Then, the optimal choice of a sink minimizes the sum of scores of sub-networks consisting of the sink and the remaining nodes. If we denote an optimal score of a network over $U\subseteq\calX$ by $Q^*(U)$, then we have:
\begin{equation}\label{eq:Qopt}
Q^*(U) = \min_{X_i \in U}(d(X_i,U - \{X_i\}) + Q^*(U - \{X_i\})),
\end{equation}
and by using dynamic programming to compute $Q^*(\calX)$ we can construct an optimal ordering $\pi^*(\calX)$.

The dynamic programming algorithm can be visualized as operating on the lattice $L$ with $n + 1$ levels formed by the partial order ``set inclusion'' on the power set of $\calX$~\cite{Nikolova2013,Koivisto2004,Yuan2011} (see Figure~\ref{fig:path}). Two nodes in the lattice, $U'$ and $U$, are connected only if $U' \subset U$ and $|U| = |U'| + 1$. Here we use $U$ to denote both a subset of $\calX$ and the corresponding node in the lattice $L$. An important property of the lattice is that any path from its root to one of its nodes is equivalent to a specific ordering of variables in that node. Moreover, an edge $(U',U)$ corresponds to evaluating $d(U - U', U')$. For instance, the path marked in Figure~\ref{fig:path} represents ordering $\pi(\calX) = [X_3, X_2, X_4, X_1]$, and edge $(\{X_3\},\{X_2,X_3\})$ means computing $d(X_2,\{X_3\})$. In~\cite{Yuan2011} Yuan et al. observed that finding an optimal ordering (i.e. an optimal network structure) is equivalent to finding a shortest path from the root to the sink in the dynamic programming lattice (which they call an {\it order graph}). Because this formulation gives a significant flexibility in the design of search algorithms we decided to adopt it in our approach.

\begin{figure*}[!t]
  \centering
  \subfloat[]{\label{fig:path}\includegraphics[scale=0.7]{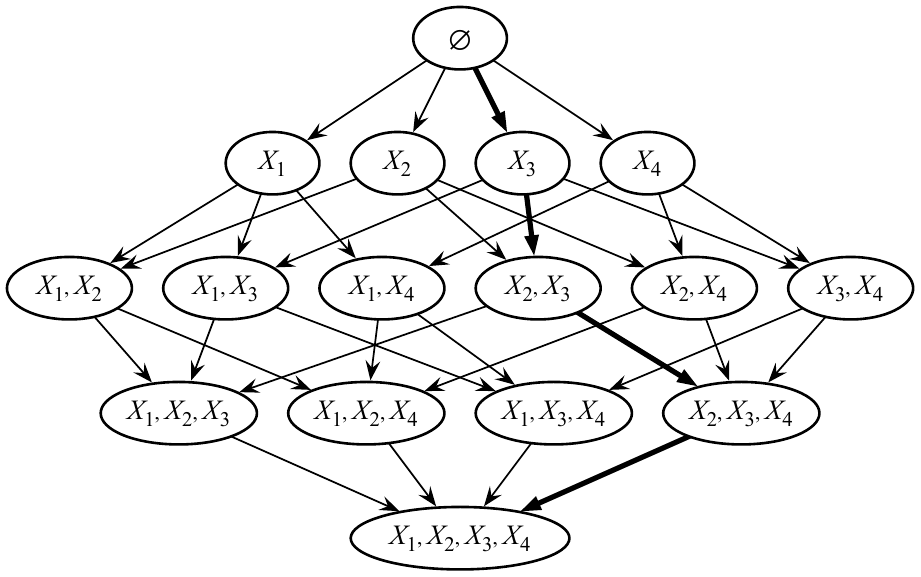}}~~
  \subfloat[]{\label{fig:spg}\includegraphics[scale=0.7]{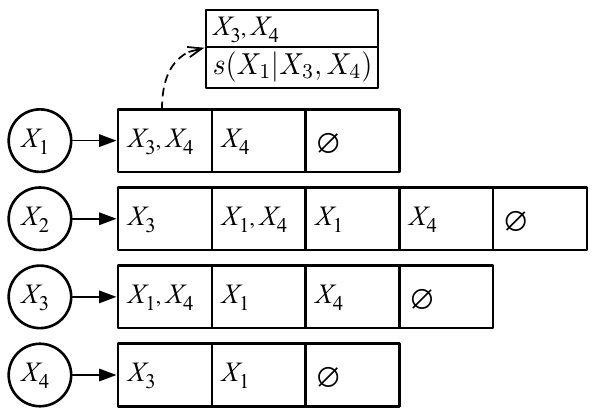}}~~
  \subfloat[]{\label{fig:path-compressed}\includegraphics[scale=0.7]{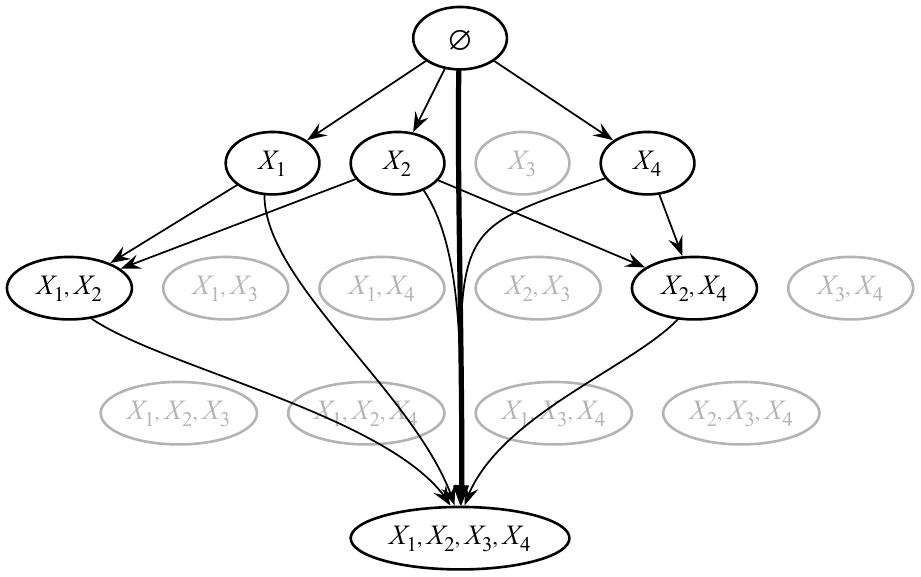}}  
 \caption{(a) Dynamic programming lattice for the problem with four variables, (b) example parent graph structure where for each variable $X_i$ an ordered vector of tuples $(U,s(X_i,U))$ is stored, and (c) the constrained lattice created via our optimal path extension technique as prescribed by the parent graph in (b). Let path marked in bold be the optimal solution. By constraining the dynamic programming lattice, after discovering node $\{X_3\}$ a search algorithm can follow directly to the final node.}\label{fig:pathnspg}
\end{figure*}

\section{Our Proposed Method}\label{sec:method}

The starting point for solving our optimization problem are recurrences in Equations~(\ref{eq:d})~and~(\ref{eq:Qopt}). The standard dynamic programming approach involves memoization of both $Q^{*}$ and $d$. If we imagine dynamic programming as progressing in the top-down manner over the lattice $L$, then the memory complexity of memoization is $\Theta\left(\binom{n}{\frac{n}{2}}\right)$, which is the number of nodes in the largest layer of $L$. This quickly becomes prohibitive for any but small number of variables. On the computational side, dynamic programming requires that all edges in the lattice are visited, which implies $\Theta\left(n \cdot 2^n\right)$ steps. By casting the problem into the shortest path formulation we gain the flexibility of considering both recurrences independently, such that the memory and time complexity are reduced.

To address the complexity of computing and storing $d$, we use the concept of {\it parent graph}~\cite{Yuan2011}. When computing $d$, for each variable $X_i$ we are expected to memoize the dynamic programming lattice over the power set of $\calX - X_i$. However, this process can be optimized as follows. We say that $U \subseteq \calX - \{X_i\}$ is a maximal candidate parents set for $X_i$ if no subset of $U$ has score better than $d(X_i, U)$, i.e. $\displaystyle\forall_{U' \subset U} d(X_i,U) < d(X_i,U')$ as we are considering the minimization problem. By definition, we have that $d(X_i,U) = s(X_i,U)$ for any maximal candidate parents set~$U$. As $U$ represents the best score for all its possible subsets, it is sufficient to memoize $s(X_i,U)$ only. Then, by storing $s$ for all maximal parents sets of $X_i$ we can answer efficiently all queries $d(X_i,U'')$ for any $U'' \supseteq U$. If $U''$ is one of the maximal parents sets of $X_i$ we simply return stored $s(X_i,U'')$. Otherwise, $d(X_i,U'')$ must be equal to the smallest $s$ among all maximal parents sets for which $U''$ is a superset.

Because the number of all maximal candidate parents sets is much smaller than the entire dynamic programming lattice, and they can be discovered incrementally, we drastically reduce the overall memory footprint. Depending on the scoring function and the input data the reduction might be by orders of magnitude (see for example Table~\ref{tab:data} in Section~\ref{sec:results}).

In our approach, to store and access all maximal candidate parents sets we create a parent graph data structure that for each $X_i$ maintains an ordered vector of tuples $(U,s(X_i,U))$ (see example in Figure~\ref{fig:spg}). Tuples are sorted in the ascending order of $s$, and we use binary encoding to represent $U$. The binary representation allows for $O(1)$ set containment and set equality checking as long as the number of variables does not exceed the word size of the executing hardware (e.g. $n\leq 64$ on a 64-bit architecture). By keeping vectors ordered, we can get the optimal choice of parents, and the corresponding score, for $X_i$ in $O(1)$, and we can answer arbitrary query $d(X_i,U)$ in $O(l)$, where $l$ is the size of the vector for $X_i$. This is because for a given $X_i$ its optimal parents set will be stored as the first entry of the corresponding sorted vector, and to answer $d(X_i,U)$ we have to find the first maximal parents set that is a subset~of~$U$.

To construct the parent graph for a given input data, multiple approaches and optimizations are possible, especially in how individual values $s$ are computed and how dynamic programming is executed. We discuss these techniques in a separate publication, and here we assume that the parent graph has been precomputed and can be accessed when searching for $Q^*$. We note however that in many cases the cost of parent graph construction is comparable or even more significant than the cost of computing $Q^*$ via the shortest path problem.

\subsection{Optimal Path Extension}

With the parent graph available we can focus now on the second recursion, Equation~(\ref{eq:Qopt}), to find an optimal network score and hence optimal network structure. As we already explained, the problem is equivalent to finding a shortest path from the root to the sink of the corresponding dynamic programming lattice $L$. As previously, the challenge is due to the immense size of $L$.

Let $Q(U, \pi)$, defined as: \[ Q(U,\pi)=\sum_{X_i \in U} d\big(X_i,\{X_j|X_j \prec X_i \textrm{ in } \pi(U) \}\big), \] be the score of a network over set $U \subseteq \calX$ prescribed by the topological ordering $\pi(U)$. Equivalently, $Q(U,\pi)$ is the length of the path from the root of the lattice $L$ to the node $U$ that yields ordering $\pi(U)$ (recall that we use $U$ to denote both a subset of $\calX$ and a node in $L$). To find the desired shortest path in $L$, and hence $Q^*$ and $\pi^*$, we could use any shortest path solver ranging from BFS to A-star and its variants (e.g. Iterative Deepening Search). However, in all cases memory constraint becomes a limiting factor. For example, in BFS at least two consecutive layers of $L$ have to be maintained in memory, and in A-star open and closed lists may grow excessively depending on the quality of the heuristic function used. Consequently, to scale up it is critical to further constraint the search space, i.e. reduce the number of nodes that have to be considered in the dynamic programming lattice. To achieve this, we introduce the {\it optimal path extension} technique.

Consider a node $U$ at the level $k$ in the lattice $L$ (the root of the lattice is at level $k=0$). This node has $k$ incoming edges and $n - k$ outgoing edges. Each of the outgoing edges corresponds to one particular way in which $U$, and thus any of its corresponding orderings/paths can be extended. However, in many cases we can immediately identify the only extensions that can lead to the optimal path from $U$ to the sink of the lattice. Since other extensions of $U$ will be suboptimal, we can safely remove them from consideration as they cannot be a part of the final shortest path. To identify a node that can be optimally extended we use the following observation. If $U$ is a superset of the optimal parents set of $X_i$, then by definition of $d$ no variable can be added to $U$ such that the score $d(X_i, U)$ is improved. Moreover, to maintain topological ordering $X_i$ must be preceded by all variables in $U$. Consequently, any optimal path from $U$ to the sink of the lattice must include edge from $U$ to $U \cup \{X_i\}$. This intuition is captured in the following theorem:

\begin{theorem}[Optimal Path Extension]\label{th:path} Let $U$ be a superset of the optimal parents set for $X_{i} \in \calX - \{X_i\}$. Then, in the optimal path from $U$ to the sink of the dynamic programming lattice $U$ must be followed by $U \cup \{X_i\}$.
\end{theorem}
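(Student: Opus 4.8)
The plan is to prove the statement by an exchange (move-to-front) argument applied to the edge costs along a path. Concretely, I would start from \emph{any} optimal path leaving $U$ toward the sink and show that relocating the edge that first introduces $X_i$ so that it is taken as the very first step can only decrease the total path length. Since the shortest-path search only needs to retain one optimal completion, this shows there is always an optimal extension of $U$ whose first edge is $U \to U \cup \{X_i\}$, which is exactly what licenses deleting every other outgoing edge of $U$.

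The argument rests on two structural properties of $d$, which I would record as short lemmas. The first is \emph{monotonicity}: the recurrence in Equation~(\ref{eq:d}) gives $d(X_j, V) \le d(X_j, V - \{a\})$ for any $a \in V$, so by induction on $|V \setminus V'|$ we get $d(X_j, V) \le d(X_j, V')$ whenever $V' \subseteq V$ — enlarging the candidate-parent pool never increases the optimal-parent score. The second is \emph{saturation}: if $U$ contains the optimal parents set $Pa^*(X_i)$ of $X_i$, then $d(X_i, U) = s(X_i, Pa^*(X_i))$ equals the global minimum over all parent sets of $X_i$, whence $d(X_i, U \cup S) = d(X_i, U)$ for every $S$. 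Saturation follows directly from the definition of $d$ as a minimum over subsets together with monotonicity, since $Pa^*(X_i) \subseteq U$ makes the global minimizer already available.

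With these in hand the exchange step proceeds as follows. Let $\pi$ be an optimal path from $U$ to the sink; since $X_i \notin U$ (otherwise the claim is vacuous) while $X_i \in \calX$, the path introduces $X_i$ at some position, say after first adding $X_{j_1}, \ldots, X_{j_t}$, and if $t=0$ there is nothing to prove. Form $\pi'$ by moving $X_i$ to the front and leaving the relative order of all other variables unchanged. Both paths reach the common node $U \cup \{X_i, X_{j_1}, \ldots, X_{j_t}\}$ after $t+1$ steps, so their suffixes past that node coincide and contribute equally; only the $t+1$ prefix edges need comparison. There, the term for $X_i$ itself changes from $d\big(X_i, U \cup \{X_{j_1},\ldots,X_{j_t}\}\big)$ to $d(X_i, U)$, and these are equal by saturation; each term $d(X_{j_k}, \cdot)$ merely gains $X_i$ as one extra admissible parent in $\pi'$, so by monotonicity it cannot increase. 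Summing the prefix, the length of $\pi'$ is at most that of $\pi$, so $\pi'$ is also optimal and begins with $U \to U \cup \{X_i\}$.

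The main obstacle is not any single estimate but getting the bookkeeping of the exchange exactly right: I must verify that the two orderings genuinely re-merge at the common node so that the identical suffix drops out of the comparison, and I must respect that monotonicity is available only in the non-strict form. The latter is why the conclusion is properly read as ``there exists an optimal path through $U \cup \{X_i\}$,'' consistent with the paper's deliberate use of ``an optimal'': one cannot assert a strict improvement, but optimality is preserved, and that is all the pruning in the constrained lattice requires.
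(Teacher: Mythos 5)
Your proof is correct, and it takes a genuinely more general route than the paper's. The paper argues by a single adjacent transposition at the front of the path: it compares only the two completions $\pi(U){\frown}X_i{\frown}X_j{\frown}\pi(V)$ and $\pi(U){\frown}X_j{\frown}X_i{\frown}\pi(V)$ (with a shared optimal suffix of length $R$ from $U\cup\{X_i,X_j\}$), uses $d(X_i,U\cup\{X_j\})=d(X_i,U)$ — your saturation lemma — and $d(X_j,U)\geq d(X_j,U\cup\{X_i\})$ — your monotonicity, applied once — and, like you, notes that ties force the non-strict ``an optimal path'' reading (its second case, where $U$ also contains the optimal parents of $X_j$). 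As written, that comparison only covers paths that introduce $X_i$ in the first or second step after $U$; handling a path that introduces $X_i$ at depth $t+1$ requires bubbling $X_i$ forward through $t$ adjacent swaps, which is valid because every superset of $U$ still contains the optimal parents set of $X_i$ so the saturation hypothesis persists, but the paper leaves this iteration implicit. Your move-to-front exchange dispatches the arbitrary position in one step: saturation equates the two $d(X_i,\cdot)$ terms, monotonicity bounds each of the $t$ terms $d(X_{j_k},\cdot)$ that merely gain $X_i$ in their candidate pool, and the re-merge at the common node $U\cup\{X_i,X_{j_1},\ldots,X_{j_t}\}$ cancels the suffixes. What the paper's version buys is brevity and a direct picture of the local pruning decision at $U$; what yours buys is a self-contained argument with the two structural properties of $d$ isolated as lemmas, explicit coverage of the case the paper's written proof skips, and correct treatment of the degenerate cases ($X_i\in U$, $t=0$) — including the observation that the hypothesis in the statement should read $X_i\in\calX-U$ rather than $X_i\in\calX-\{X_i\}$.
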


\begin{proof}
Let $\pi_1(\calX) = \pi(U){\frown}X_i{\frown}X_j{\frown}\pi(V)$ and $\pi_2(\calX) = \pi(U){\frown}X_j{\frown}X_i{\frown}\pi(V)$, where $V = \calX - U - \{X_i,X_j\}$, represent two possible paths from the source to the sink of the dynamic programming lattice. We have that \[Q(\calX, \pi_1) = Q(U,\pi) + d(X_i,U) + d(X_j, U \cup \{X_i\}) + R\] and \[Q(\calX, \pi_2) = Q(U,\pi) + d(X_j,U) + d(X_i, U \cup \{X_j\}) + R, \] where $R$ is the length of the shortest path from $U \cup \{X_i,X_j\}$ to the sink of the lattice. Because $U$ is the superset of the optimal parents set for $X_i$ we have $d(X_i,U \cup \{X_j\}) = d(X_i,U)$. Now we consider two cases. If $U$ is not optimal parents set for $X_j$, we have $d(X_j,U) \geq d(X_j,U \cup \{X_i\})$ and it follows that:
\begin{align*}
  Q(\calX, \pi_2) & \geq Q(U,\pi) + d(X_j, U \cup \{X_i\}) + d(X_i, U) + R \\
  & \geq Q(\calX, \pi_1),
\end{align*}
and hence $\pi_2$ is not optimal. If $U$ is the optimal parents set for $X_j$, then both paths become equivalent and optimal.
\end{proof}

To better illustrate the optimal path extension idea, consider example dynamic programming lattice and parent graph presented in Figure~\ref{fig:spg}. The optimal parent set of $X_2$ and $X_4$ consists of $X_3$ only. Now take node $U=\{X_3\}$. Since $U$ is a superset of the optimal parent set of $X_2$ and $X_4$, from Theorem~\ref{th:path} to extend $U$ it is sufficient to consider one of only these two variables. Suppose that we extend $U$ by adding $X_2$. The new node $\{X_2,X_3\}$ with the ordering $[X_3,X_2]$ remains the superset of the optimal parent set of $X_4$. Thus, we can further extend $\{X_2,X_3\}$ by adding $X_4$ with the corresponding ordering $[X_3,X_2,X_4]$. In the final step, we can extend one more time by adding $X_1$, hence reaching the sink of the lattice. In a similar way, we can extend $\{X_1,X_4\}$ by including $X_3$ and then $X_2$. In some cases extension will not be possible. For example, nodes $\{X_1\}$ and $\{X_1,X_2\}$ cannot be extended as no variable has optimal parents set that would be a subset of either of them. If we consider all possible path extensions, then the final compacted lattice will be reduced by one node and seven edges (see Figure~\ref{fig:path-compressed}).

By applying our path extension technique we can significantly reduce the number of nodes and edges that have to be considered in the dynamic programming lattice. The extent to which reduction can be performed depends on the size of the optimal parents set of each variable -- smaller the optimal parents set, higher the chance that the optimal path extension can be applied. Moreover, the effectiveness of our method will be higher for larger problems (i.e. problems with larger $\calX$) because the dynamic programming lattice will include more nodes with a potential to extend. While at this moment we do not have complete theoretical bound on the expected number of nodes and edges that can be removed via the path extension technique, our experimental results in Section~\ref{sec:results} show excellent performance in practice.

\subsection{Searching via Optimal Path Extension}

The net effect of using our optimal path extension technique is compaction of the dynamic programming lattice. However, it would be counterproductive to first build the lattice and then apply the technique. Instead, the optimal path extension can be efficiently combined with any shortest path solver. To show how, we use the classic A-star search with a simple heuristic function. The function relaxes the BN acyclicity constraint and assumes that all variables not included in the currently explored node form a network by selecting optimal parents from among all other variables. Formally, the heuristic function is defined as $h(U) = \sum_{X_i \in \calX-U} d(X_i,\calX-X_i)$. The function is easy to implement and it is known to be consistent~\cite{Yuan2011}.

The resulting A-star procedure is outlined in Algorithm~\ref{alg:astar}. To represent a search state we use simple structure with attribute $.g$ storing the exact distance from the root to the current node in the lattice and $.h$ storing the estimated distance from the current node to the sink of the lattice (as given by function $h$). For convenience, we store also~$.f$, which is the sum of $.g$ and $.h$. The corresponding set of variables for a given state (i.e. the actual lattice node) is stored in attribute $.set$ using binary encoding as explained earlier. Finally, to reconstruct the optimal path and hence ordering we store also parent node information in attribute~$.p$. We note that the final optimal network structure can be easily reconstructed from the parent graph and the shortest path information.

The algorithm follows the classic A-star pattern with $Q$ representing open list implemented as Fibonacci heap, and $C$ maintaining a closed list implemented as a simple hash table. The search states corresponding to the explored lattice nodes are generated on the fly in the loop in line~\ref{ln:loop}. In lines~\ref{ln:dopt1},~\ref{ln:dsuperset}~and~\ref{ln:dopt2} we use the parent graph structure to extract values $d$. Recall that this requires a linear scan to obtain $d(X_i,v.set)$, and $O(1)$ to obtain $d(X_i, \calX - \{ X_i \})$, which is the score of the optimal parents set for~$X_i$. The cost of the linear scan in line~\ref{ln:dopt2} is in general negligible. This is because as the algorithm progresses the $v.set$ becomes larger and hence the probability of finding a relevant subset in the parent graph increases. The key element of the algorithm is the path extension procedure invoked in line~\ref{ln:path}. The procedure, outlined in Algorithm~\ref{alg:path}, returns a search state that can be reached directly from the current state via application of our path extension technique (so for example, for the node $\{X_3\}$ in Figure~\ref{fig:pathnspg} it would return node $\{X_1,X_2,X_3,X_4\}$).

The path extension procedure iteratively applies Theorem~\ref{th:path} to the input node represented by $u.set$. First, it tests each variable $X_i$ to see whether the input node is a superset of the $X_i$'s optimal parents set (line~\ref{ln:optsuper}). In practice, this requires one set containment check between $u.set$ and the set of variables stored in the first entry of the parent graph for $X_i$. If optimal path extension can be applied, state $u$ is updated and the process continues until no extension is possible. The final node is returned back to the main A-star procedure that follows without any changes.

\begin{algorithm}
  \caption{\textsc{A-star With Optimal Path Extension}}\label{alg:astar}
  \begin{algorithmic}[1]
    \STATE $s.g \leftarrow 0$
    \STATE $s.h \leftarrow 0$
	\FOR {$X_i \in \calX$}
    	\STATE $s.h \leftarrow s.h + d(X_i, \calX - \{ X_i \} )$\label{ln:dopt1}
    \ENDFOR
    \STATE $s.f \leftarrow s.h$
    \STATE $s.set \leftarrow \phi$
    \STATE $s.p \leftarrow \phi$
    \STATE $Q.push(s)$
    \WHILE {$Q \neq \phi$}
		\STATE $v \leftarrow Q.pop()$
        \STATE $C.push(v)$
        \IF {$v.set = \calX$}
        	\RETURN $\textrm{\textsc{Backtrack}}(v, C)$
        \ENDIF
        
        \FOR {$X_i \in \calX - v.set$}\label{ln:loop}
            \STATE $u.g \leftarrow v.g + d(X_i, v.set)$\label{ln:dsuperset}
            \STATE $u.h \leftarrow v.h - d(X_i, \calX - \{ X_i \})$\label{ln:dopt2}
            \STATE $u.f \leftarrow u.g + u.h$
            \STATE $u.set \leftarrow u.set \cup \{X_i\}$
            \STATE $u.p \leftarrow v.set$
            \STATE $ u \leftarrow \textrm{\textsc{PathExtension}}(u)$\label{ln:path}
           
            \IF {$u \notin C$}  
            	\IF {$u \in Q$}
            		\STATE $pu \leftarrow Q.handle(u)$
    	            \IF {$pu.f > u.f$}
	                	\STATE $pu \leftarrow u$
                	    \STATE $Q.update(pu)$
	                \ENDIF
    	        \ELSE
        	    	\STATE $Q.push(u)$
            	\ENDIF
        	\ENDIF
        \ENDFOR        
	\ENDWHILE
  \end{algorithmic}
\end{algorithm}

\begin{algorithm}
  \caption{\textsc{PathExtension}}\label{alg:path}
  \begin{algorithmic}[1]
    \REPEAT
    	\STATE $extended \leftarrow false$
    	\FOR {$X_i \in \calX - u.set$} 
    		\IF {$d(X_i, u.set) = d(X_i, \calX - \{ X_i \})$}\label{ln:optsuper}
				\STATE $u.g \leftarrow u.g + d(X_i, \calX - \{ X_i \})$
	            \STATE $u.h \leftarrow u.h - d(X_i, \calX - \{ X_i \})$
                \STATE $u.set \leftarrow u.set \cup \{X_i\}$ 
                \STATE $extended \leftarrow true$
        	\ENDIF
	    \ENDFOR
    \UNTIL{\NOT $extended$}
    \RETURN $u$
  \end{algorithmic}
\end{algorithm}

From the computational complexity perspective, our approach includes a minimal overhead (e.g. a $O(n)$ linear scan in Algorithm~\ref{alg:path}) at the benefit of significantly constraining the number of nodes that have to be considered and stored in $Q$ and $C$. From the implementation perspective, only one small procedure has to be added to the A-star core. This holds true for other search algorithms as well. For example, in case of BFS the path extension procedure could be invoked for every node before that node is pushed into the FIFO queue. Finally, the method does not conflict but rather complements other possible optimizations such as exploring independencies between variables, which we do not discuss or consider in this work.

\section{Experimental Results}\label{sec:results}

We implemented our proposed method in the \SABNA{} toolkit (Scalable Accelerated Bayesian Network Analytics). Currently, the toolkit supports efficient parent graph construction under the MDL scoring function from any categorical data, and different optimal search strategies. It is written in C++11 and is available under the MIT License from the GitLab repository (https://gitlab.com/SCoRe-Group/SABNA-Release).

To understand the performance characteristics of our approach, we compared it with a top-down Breadth First Search (BFS) and the A-star search as implemented in the \UrL{} package version from 2016-05-17~\cite{UrLearning2016}. We decided to use \UrL{} as this software provides some of the most advanced A-star search heuristics, and has been demonstrated to outperform other methods~\cite{Yuan2011}. All tools were compiled using GCC 4.9.2 with standard optimization flags. To perform our tests we used a dedicated Linux compute server running in the exclusive mode under the Simple Linux Utility for Resource Management (SLURM). The server has dual 10-core Intel Xeon E5v3 2.3GHz processor and 64GB of RAM. However, in all tests only a single core was used to run the tested code with the remaining cores left to the operating system.

\subsection{Test Data and Experimental Setup}

\begin{table}[t]
\caption{Datasets used in the experiments.}\label{tab:data}
\begin{tabularx}{\columnwidth}{XXXXl}
\toprule
Dataset     &   $n$ & $m$       &  PG size	& PG size reduction   	\\
\midrule
Mushroom    &   23  &   8,124  &   375,609		& $2.6 \times {10}^{2}  $	\\
Autos       &   26  &   159    &   2,391        & $3.6 \times {10}^{5} $ 	\\
Insurance   &   27  &   1,000  &   1,518        & $1.2 \times {10}^{6} $	\\
Water       &   32  &   1,000  &   328         	& $2.1 \times {10}^{8} $	\\
Soybean     &   36  &   266    &   5,926        & $2.1 \times {10}^{8} $	\\
Alarm       &   37  &   1,000  &   672         	& $3.8 \times {10}^{9} $	\\
Bands       &   39  &   277    &   887         	& $1.2 \times {10}^{10}$	\\
\bottomrule
\end{tabularx}
\end{table}

To perform our tests we used a collection of standard benchmark datasets summarized in Table~\ref{tab:data}~\cite{UrLearning2016}. To ensure that the results are comparable between \SABNA{} and \UrL{} we used the following protocol. Because \UrL{} supports only binary variables, all datasets had been transformed into the $\{0,1\}$ domain, with $0$ assigned to every value below the mean for a given variable and $1$ if the value was above the mean. The parent graph for each dataset had been constructed using the \UrL{} tool with default parameters, and all methods ran with the same parent graph. In Table~\ref{tab:data}, we report the size of each parent graph (PG) together with how its size is reduced compared to storing all values~$d$.
Finally, all tools were limited to the 64GB of RAM (i.e. no swap memory) and were terminated if they exceeded two hours runtime limit. In Tables~\ref{tab:runtime}--\ref{tab:memory} we summarize obtained results, with the runtime and memory usage averaged over 10 executions with a negligible variance. The runtime was measured via the system wall-clock, and approximate memory usage is based on the SLURM reports.

\begin{table}[t]
\caption{Runtime comparison of different methods.}\label{tab:runtime}
\begin{tabularx}{\columnwidth}{XXXX}
\toprule
Dataset 	& BFS 		 	& URLearning & \textbf{SABNA} \\
\midrule	
Mushroom    &   2m12s       &   1m29s   &   1m	    \\     
Autos       &   3m54s       &   37s     &   13s    \\
Insurance   &   8m14s     	   	&   7m25s   &   2m28s    \\
Water       &   M	     	&   M       &   2m8s     \\
Soybean     &   M           &   M       &   1h36m  \\
Alarm       &   M           &   T       &   1h3m  \\
Bands       &   M           &   T       &   1h10m    \\
\bottomrule
\end{tabularx}
\centering
~\\
M -- program ran out of memory. T -- program ran out of time.
\end{table}

\begin{table}[t]
\caption{Number of nodes visited by method ($\times 10^6$).}\label{tab:nodes}
\begin{tabularx}{\columnwidth}{XXXX}
\toprule
Dataset  	& Lattice size   	& URLearning & \textbf{SABNA} \\
\midrule
Mushroom	& 8.38    				&   2.50   &   2.72	    \\        
Autos       & 67.10					&   4.70   &   1.84	    \\
Insurance   & 134.21				&   57     &   13.52		\\
Water       & $4.29 \times 10^{3}$	&   --     &   13.03		\\
Soybean     & $6.87 \times 10^{4}$	&   --     &   330.50	\\
Alarm       & $1.37 \times 10^{5}$ 	&  --      &   217.37	\\
Bands       & $5.49 \times 10^{5}$ 	& --       &   233.80	\\   
\bottomrule
\end{tabularx}
\centering
~\\
BFS has to visit all $2^n$ nodes in the lattice.
\end{table}

\begin{table}[t]
\caption{Memory usage for different methods (in GB).}\label{tab:memory}
\begin{tabularx}{\columnwidth}{XXXX}
\toprule
Dataset 	& BFS 			& URLearning & \textbf{SABNA} \\
\midrule
Mushroom    &   0.23    	&   0.57     &   0.21       \\     
Autos       &   1.50  		&   1.84     &   0.001    	\\
Insurance   &   2.99    	&   10.67    &   1.07      	\\
Water       &   --	    	&   --       &   1.03      	\\
Soybean     &   --      	&   --       &   27.16    	\\
Alarm       &   --      	&   --       &   17.23      \\
Bands       &   --      	&   --       &   20.53     	\\
\bottomrule
\end{tabularx}
\end{table}

\subsection{Results Discussion}

We start the analysis by looking at the runtime of all three methods. Table~\ref{tab:runtime} shows that \SABNA{} is able to process all test datasets and it outperforms both BFS and the A-star strategy of \UrL{}, irrespective of the input dataset. As the number of variables increases, the performance difference becomes more pronounced, and for the largest dataset successfully processed by all methods (i.e. ``Insurance'') \SABNA{} is over three times faster than the other methods. Both \SABNA{} and \UrL{} use the A-star search strategy. However, \SABNA{} implements the most basic heuristic while \UrL{} employs a provably tighter heuristic with pattern database~\cite{Yuan2011,Felner2004}. In spite of this, \SABNA{} explores significantly fewer states of the dynamic programming lattice, as shown in Table~\ref{tab:nodes}. This implies that the efficiency of \SABNA{} should be attributed solely to our path extension technique. In fact, in additional tests not reported here, the basic A-star search performed only slightly better than BFS that visits all nodes in the lattice. This is significant considering how heuristic-sensitive is A-star. For example, because the \UrL{} heuristic is not well tuned to the data in ``Water'' and ``Soybean'' datasets, the open and closed lists of A-star explode and the method runs out of memory. By contrast, for the same datasets \SABNA{} requires only 1GB and 27GB respectively (see Table~\ref{tab:memory}). What is more, even for the largest datasets for which BFS and \UrL{} failed, \SABNA{} consumed no more than 28GB of memory. This is well explained by the results in Table~\ref{tab:nodes}. With one exception, \SABNA{} visits several times fewer nodes in the dynamic programming lattice than \UrL{}. This directly translates into a small memory footprint and clearly demonstrates the effectiveness of our approach. For the ``Mushroom'' dataset \SABNA{} explores more nodes, yet it remains faster. This can be attributed to the overhead due to the pattern database construction in \UrL. At the same time, the overhead of our method is minimal. Because the ``Mushroom'' dataset is relatively small the overhead becomes the major component of the overall runtime. To summarize, presented experimental results consistently demonstrate that our path extension technique significantly reduces the number of states that have to be explored during the search process. This has the effect of reducing both memory and computational complexity such that much larger data can be processed.

\section{Conclusion}\label{sec:conclusion}

In this paper, we presented a new approach to accelerate the exact structure learning of Bayesian networks. Our experimental results demonstrated that the method performs extremely well in practice, even though it does not improve the worst case complexity. Our method is flexible and can be seamlessly combined with different search strategies. One of the main challenges in finding optimal BN structures is exponentially growing space complexity. While our method partially addresses this challenge, it can be further improved by expanding into distributed memory architectures (e.g. similar to our previous work~\cite{Nikolova2013}). This could open new range of applications for exact structure learning, including in clinical decision support systems or in genetics where problems with large number of variables are common.

\section{Acknowledgments}\label{sec:acknowledgements}

Authors wish to acknowledge support provided by the Center for Computational Research at the University at Buffalo.

\bibliographystyle{IEEEtran}
\bibliography{references}

\end{document}